\newtheorem{thm}{Theorem}
\newtheorem{lma}{Lemma}
\newtheorem{definition}{Definition}
\renewcommand{\algorithmiccomment}[1]{\bgroup\hfill$\triangleright$~{\scriptsize\textit{#1}}\egroup}
\def \zero {\mathbf{0}}
\def \x {\mathbf{x}}
\def \y {\mathbf{y}}
\def \z {\mathbf{z}}
\def \X {\mathcal{X}}
\def \Z {\mathcal{Z}}
\def \D {\mathcal{D}}
\def \M {\mathbf{M}}
\def \A {\mathbf{A}}
\def \I {\mathbf{I}}
\def \zero {\mathbf{0}}
\def \bb {\mathbf{b}}
\def \R {\mathbb{R}}
\def \N {\mathbb{N}}
\def \bigO {\mathcal{O}}
\def \algo {\mathcal{A}}
\def \E {\mathbf{E}}
\def \TT {\text{T}}
\def \Pr {\text{Pr}}
\def \btheta {\bm{\theta}}
\def \F  {\mathcal{F}}
\def \event {\mathcal{E}}
\begin{document}

\title{Contextual Bandits with Random Projection}

\author{\name Xiaotian Yu \email xtyu@cse.cuhk.edu.hk \\
       \addr Department of Computer Science and Engineering\\
       The Chinese University of Hong Kong\\
       Shatin, N.T., Hong Kong}


\maketitle

\begin{abstract}
Contextual bandits with linear payoffs, which are also known as linear bandits, provide a powerful alternative for solving practical problems of sequential decisions, e.g., online advertisements.  In the era of big data, contextual data usually tend to be high-dimensional, which leads to new challenges for traditional linear bandits mostly designed for the setting of low-dimensional contextual data. Due to the curse of dimensionality, there are two challenges in most of the current bandit algorithms: the first is high time-complexity; and the second is extreme large upper regret bounds with high-dimensional data.  In this paper, in order to attack the above two challenges effectively,  we develop an algorithm of Contextual Bandits via RAndom Projection (\texttt{CBRAP}) in the setting of linear payoffs, which works especially for high-dimensional contextual data.  The proposed \texttt{CBRAP} algorithm is time-efficient and flexible, because it enables players to choose an arm in a low-dimensional space, and relaxes the sparsity assumption of constant number of non-zero components in previous work.  Besides, we provide a linear upper regret bound for the proposed algorithm, which is associated with reduced dimensions. 
\end{abstract}

\section{Introduction}

The Multi-Armed Bandit (MAB) problem was proposed and investigated by Robbins in 1952, which has attracted great interests from numerous researchers in operation research and computer science~\cite{robbins1952some,auer2002finite,bubeck2012regret}.  The fundamental issue in the MAB problem and its variants focuses on the exploration-exploitation trade-off, which refers to an algorithm trying to maximize cumulative rewards in sequential decisions but the algorithm has only limited knowledge about the mechanism of generating the rewards~\cite{auer2002using}.

As a natural and important variant of the basic MAB problem, contextual bandits with linear payoffs, which are also known as linear bandits, are sequential decision-making problems with side information~\cite{wang2005bandit,dani2008stochastic,abbasi2011improved,chu2011contextual}. Specifically, given feature information of arm space for each of $T$ rounds, a learner is required to choose one of $K$ arms.  Linear bandits contain a basic assumption of linearly mapping from the arm space to the reward space~\cite{filippi2010parametric}, which should be the most common case in reality.

Recently, contextual bandits with linear payoffs have been successfully applied into many practical applications. In~\cite{tang2013automatic}, the demonstration of advertisements was based on users' input information on web pages.  The authors formulated the problem of automatic layout selection in online advertisements as a contextual bandit problem.  These personalized advertisements are expected to improve click-through rates of web links.  For these models of sequential decisions, recommendation algorithms always receive additional contextual information from users, which could be greatly useful for the online sequential decisions.

In the big data era, it is pretty common to encounter high-dimensional and/or sparse contextual information.  In this case, traditional bandit algorithms, which are mostly designed in the setting of low-dimensional data, are facing new challenges in applications. Due to the curse of dimensionality, there are two challenges in most the of current bandit algorithms.  The first is high time-complexity; and the second is extremely large upper regret bounds with high-dimensional data.  Specifically,  traditional contextual bandits (e.g.,  \texttt{LinUCB} in~\cite{chu2011contextual}) contain inverse operations in the original contextual space, which will be time-consuming for computations. Besides, the regret bounds of linear bandits in~\cite{chu2011contextual,abbasi2011improved} are related to the original dimension of contextual data, which can lead to increasing regret bounds with the curse of dimensionality. This will be even worse when the original dimension of contextual data is larger than the the total sequential rounds of playing bandits.

There have been some efforts on context bandits with linear payoffs in high-dimensional and/or sparse contextual data~\cite{deshpande2012linear,carpentier2012bandit}.  The corresponding bandit algorithms are named \texttt{BallExp} in~\cite{deshpande2012linear}, and \texttt{SLUCB} in~\cite{carpentier2012bandit}.

However, in \texttt{SLUCB}, the authors assumed that the contextual data contain $S$ non-zero components, which may not be flexible in applications, especially for cases with high-dimensional dense data. In \texttt{BallExp}, the upper and lower regret bounds are relatively loose, and are still closely related to the original dimension of contextual data.  Besides, both \texttt{SLUCB} and \texttt{BallExp} adopted the technique of ball exploration in the high-dimensional space, which will be time-consuming in applications.  For rigorous analysis of time complexity for these two algorithms, interested readers can refer to~\cite{dani2008stochastic}.

Random projection is a powerful and popular technique to deal with high-dimensional data~\cite{fern2003random,zhang2016accelerated}, which maps high-dimensional data onto a low-dimensional space. Note that random projection does not contain the assumption of sparsity in high-dimensional data. The most common case in random projection is to construct a Gaussian random matrix, where each element is an i.i.d. sample following a standard normal distribution. It has been proved to preserve the Euclidean distance within an error ball~\cite{dasgupta1999elementary}.  Besides, the error bounds for inner products in random projection have been investigated~\cite{kaban2015improved}, which will be an effective tool for analysis of upper regret bounds in contextual bandits.

In this paper, to tackle the aforementioned two challenges effectively, we propose an algorithm of Contextual Bandits via RAndom Projection (\texttt{CBRAP}) in the setting of linear payoffs, which works especially for high-dimensional data.  Note that, for simplicity in the work, we assume that contextual bandits have linear payoff functions. But the framework of our bandit algorithm can be easily generalized to the case of relaxing the linear assumption. Specifically, our proposed algorithm adopts random projection to map the high-dimensional contextual information onto a low-dimension space, where we should design a random matrix.  The proposed \texttt{CBRAP} algorithm is time-efficient and flexible, because it enables players to choose an arm in a low-dimensional space, and relaxes the sparsity assumption of constant number of non-zero components in previous work.  Besides, we prove an upper regret bound for the proposed algorithm, and show the bound to be better than the traditional ones with appropriate reduced dimensions. By comparing with three benchmark algorithms (i.e., \texttt{LinUCB}, \texttt{BallExp} and \texttt{SLUCB}), we demonstrate improved performance on cumulative payoffs of the \texttt{CBRAP} algorithm during its sequential decisions on both synthetic and real-world datasets, as well as its superior time-efficiency.

In summary, we make the following contributions.
\begin{compactitem}
  \item For contextual bandits in the setting of linear payoffs, we develop an efficient and practical algorithm named \texttt{CBRAP} by taking advantage of random projection.
  \item We derive an upper regret bound for the proposed \texttt{CBRAP} algorithm, which guarantee the worst case is associated with the reduced dimensions. Besides, our algorithm is more flexible and time-efficient than \texttt{BallExp} and \texttt{SLUCB} in high-dimensional settings.
  \item We evaluate the \texttt{CBRAP} algorithm via a series of experiments with synthetic and real-world datasets.  Compared with the three benchmarks, we demonstrate the proposed algorithm's improved performance of cumulative payoffs during sequential decisions, as well as its time-efficiency.
\end{compactitem}

\section{Preliminary and Related Work}
In this section, we first introduce notions and the definition of sub-Gaussian of a random variable, which will be used in this paper.  Then, we present the process of contextual bandits with linear payoffs, as well as the metric of bandits. Finally, we provide a brief survey on random projection.

\subsection{Notations and Definition of Sub-Gaussian}
The total sequential rounds of playing bandits is $T$.  For each round $t\in [T]$ with $[T] = \{1,2,\cdots,T\}$, a learner receives contextual information from the set of $\X\in \R^n$, where $n$ can be an extremely large integer representing a high-dimensional space.  In this work, high-dimensional contextual data precisely mean that $T\leq n$ or even $T\ll n$, which is the case mentioned in~\cite{carpentier2012bandit}. Let $K\in \N_+$ be the number of arms and $\pi_{t,y}\in [0,1]$ the reward of arm $y$ on round $t$ with $y\in [K]$ and $[K] = \{1,2,\cdots,K\}$.  We adopt $\|\cdot\|_2$ to denote the $\ell^2$ norm of a vector $\x\in\R^n$, and $\I_{m\times m}$ to denote the identity matrix with dimensions of $m\times m$.  For a positive definite matrix $\A\in\R^{m\times m}$, the weighted norm of vector $\x$ is defined as $\|\x\|_{\A}=\sqrt{\x^\TT\A\x}$.  The inner product is represented as $\langle\cdot,\cdot\rangle$, and the weighted inner product is $\x\A^\TT\y = \langle\x,\y\rangle_{\A}$.

Mathematically, we give the following definition on the sub-Gaussian of a random variable.
\begin{definition}[\cite{buldygin1980sub}]
A random variable $\xi$ is sub-Gaussian if there exists an $R\geq 0$ such that
\begin{equation}\label{eqn.sub.Gaussian}
\E[\exp(\lambda \xi)] \leq \exp(\frac{\lambda^2 R^2}{2}),
\end{equation}
where $\lambda\in \R$, $\E[\cdot]$ is the expectation of a random variable and $\exp(\cdot)$ denotes the exponential operation.
\end{definition}

Given a set $\F$, $\xi$ is conditionally $R$-sub-Gaussian if there exists $R\geq 0$, we have $\E[\exp(\lambda \xi)|\F] \leq \exp(\frac{\lambda^2 R^2}{2})$, $\forall \lambda\in \R$.

\subsection{Contextual Bandits with Linear Payoffs}

As shown in~\cite{auer2002using,chu2011contextual}, an algorithm (denoted by $\algo$) for contextual bandits with linear payoffs usually contains the following three steps at round $t$:
\begin{compactenum}
  \item contextual information $\x_{t,y}\in \X$ for all $y\in [K]$ is revealed to the bandit algorithm $\algo$;
  \item the bandit algorithm $\algo$ chooses an arm $a_t \in [K]$, which follows an underlying distribution $\Pi (\x_{t, a_t},\btheta^*)$ with $\btheta^*\in \R^n$ being the unknown true parameter vector; and
  \item a stochastic payoff $\pi_{t,a_t}\in[0,1]$ is revealed to the bandit algorithm $\algo$.
\end{compactenum}

In the above stochastic setting of step 3, for the chosen arm $a_t$ at round $t$, we usually assume that there is an underlying distribution $\Pi (\x_{t, a_t},\btheta^*)$ with the first moment information being $\langle\x_{t,a_t}, \btheta^*\rangle$, so that $\pi_{t,a_t}$ is a sample from $\Pi (\x_{t, a_t},\btheta^*)$.  Thus, we have
\begin{equation}\label{eqn.expect}
\pi_{t,a_t}= \x_{t,a_t}^\TT \btheta^* + \eta_t,
\end{equation}
where $\eta_t$ is a random noise satisfying the assumption of conditionally $R$-sub-Gaussian. That is, $\forall \lambda\in \R$, we have
\begin{equation}\label{eqn.R.sub.Gaussian}
 \E[\exp(\lambda \eta_t)|\F_t]\leq \exp\left(\frac{\lambda^2R^2}{2}\right),
\end{equation}
where $\F_t$ is the $\sigma$-algebra of $\sigma(\{\x_{i,a_i}\}_{i\in[t]},\{\eta_{i}\}_{i\in[t-1]})$ and $R\geq 0$.  Eq.~(\ref{eqn.R.sub.Gaussian}) implies that $\E [\eta_t| \F_t] = 0$.

A popular measure in demonstrating the performance of an algorithm for solving MAB problems is regret, which is defined as the difference between the expected payoff of the optimal decision in hindsight and that of the algorithm.  Mathematically, the regret of the algorithm $\algo$ is defined as
\begin{equation}
Regret(T)\triangleq \E[\sum^T_{t=1} \max_{y\in [K]} \x_{t,y}^\TT \btheta^* - \sum^{T}_{t=1}\pi_{t,a_t}].
\end{equation}

\subsection{Random Projection}
One common technique for dimensionality reduction is to perform linear random projection~\cite{baraniuk2010low,fodor2002survey}. In this paper, we consider projecting the contextual data of $\X\in\R^n$ onto a low-dimensional space of $\Z\in \R^m$.  Without loss of generality, we denote the random projection matrix by $\M \in \R^{m\times n}$. Then, we have
\begin{equation}
\z = \M \x, \label{eqn.random.projection}
\end{equation}
where $\z\in \Z$ and $\x\in \X$.

In~\cite{blum2006random}, $\M$ is constructed as a random matrix where each element follows a normal distribution of $\mathcal{N}\sim (0,\hat{\sigma}^2)$.  By setting $\hat{\sigma}^2 = 1/m$ in the next section, we name our algorithm of \texttt{CBRAP} with Standard Gaussian (SG) matrix (abbreviated as \texttt{CBRAP.SG}).

In~\cite{achlioptas2003database}, the authors proposed new methods for constructing sparse random sign matrix for dimensionality reduction.  In the ensuing section, we name our algorithm of \texttt{CBRAP} with Random Sign (RS) matrix (abbreviated as \texttt{CBRAP.RS}).

In addition to the above work, there have been other ways of constructing a matrix for random projection~\cite{li2006very,ailon2006approximate,clarkson2013low,lu2013faster}.  These investigations consider how to speed up the dimensionality reduction, or how to conduct the random projection with the assumption of low-rank matrix. In this paper, since our focus is to conduct the dimensionality reduction of contextual bandits in a general way, we only consider the construction of random matrix in~\cite{blum2006random,achlioptas2003database}.

\subsection{Related Work}
Contextual bandits are important variants of traditional MAB problems and match many real applications~\cite{langford2008epoch,li2010contextual,tang2013automatic,wang2005bandit}.  Contextual bandits with linear payoffs have been intensively investigated in previous work~\cite{abbasi2012online,abe2003reinforcement,abe1999associative,chu2011contextual,kaelbling1994associative}.  As shown in~\cite{chu2011contextual}, the traditional upper regret bound for the \texttt{LinUCB} algorithm is
\begin{equation}
Regret(T)\leq  \bigO(\sqrt{Tn\ln^3(KT\ln(T)/\delta})),\label{eqn.prior.upper.bound}
\end{equation}
where $\delta\in (0,1)$ is a confidence parameter. We know that the dimension of contextual information of $n$ in Eq.~(\ref{eqn.prior.upper.bound}) will increase when the dimension of context space increases. Roughly, we have the upper regret bound as $R(T)\leq \bigO (T)$ when $n=T$. This will be even worse when the dimension of context data becomes larger, especially for $n\ll T$.

In~\cite{abbasi2012online}, Abbasi-Yadkori et al. studied a sparse variant of stochastic linear bandits.  For high-dimensional bandits, Carpentier and Munos~\cite{carpentier2012bandit} attacked high-dimensional stochastic linear bandits with the sparsity assumption of $S$ non-zero component, where the algorithm is named \texttt{SLUCB}. The upper regret bound in~\cite{carpentier2012bandit} is $\bigO(S\sqrt{T})$.  In real applications, the sparsity assumption may be unreasonable, especially for high-dimensional dense data.

In~\cite{deshpande2012linear}, the authors proposed an algorithm named \texttt{BallExp} for high-dimensional linear bandits, where the regret bound is relatively loose, and is directly related to the dimension of data.

Recently, by adopting additional assumptions of margin and compatibility conditions in~\cite{bastani2015online}, the authors investigated high-dimensional covariates in online decision-marking.

From prior work, it is urgent and important to develop a flexible and practical algorithm for contextual bandits with high-dimensional data, where we do not have additional assumptions (e.g., sparsity or the margin condition).  This motivates our proposed \texttt{CBRAP} algorithm in the next section.

\section{The \texttt{CBRAP} Algorithm}

In this section, we firstly present the overview of \texttt{CBRAP}, and then provide theoretical analyses of a practical upper regret bound and time complexity for the algorithm.

\subsection{Overview of \texttt{CBRAP}}

Our proposed bandit algorithm is shown in Algorithm~\ref{alg.cobra}, which is named \texttt{CBRAP}.  As depicted in Algorithm~\ref{alg.cobra}, the basic idea of \texttt{CBRAP} algorithm is to project the high-dimensional data onto a low-dimensional space, and maintains a confidence set of the unknown optimal parameter $\btheta^*_\z\in\R^m$ in a low-dimensional space. $\btheta^*_\z$ is corresponding to the original true parameter $\btheta^*$ in $n$-dimensional space.
\begin{algorithm}[!t]
   \caption{\texttt{CBRAP}}\label{alg.cobra}
{
\begin{algorithmic}[1]
   \STATE {\bfseries input:} $m$, $T$, $\beta\in \R_+$ and $\alpha\in \R_+$
   \FOR {$p=1,2,\cdots, m$}
   \FOR {$q=1,2,\cdots, n$}
   \STATE generate a random value $b_{pq}$ based on SG or RS
   \STATE $\M(p,q) \leftarrow b_{pq}$
   \ENDFOR
   \ENDFOR
   \STATE $\A_0\leftarrow \I_{m\times m}$
   \STATE $\bb_0\leftarrow \zero_{m}$
   \FOR {$t = 1,2,\cdots, T$}
   \STATE observe context $\x_{t,y}\in\R^{n}$ for all $y\in [K]$
   \STATE $\z_{t,y} \leftarrow \M\x_{t,y}$ for all $y\in [K]$
   \IF {$t == 1$}
   \STATE $\A_t \leftarrow \A_{t-1}$
   \STATE $\bb_t \leftarrow \bb_{t-1}$
   \ELSE
   \STATE $\A_t\leftarrow \A_{t-1} + \z_{t-1,a_{t-1}}\z_{t-1,a_{t-1}}^\TT$
   \STATE $\bb_t\leftarrow \bb_{t-1}+\pi_{t-1, a_{t-1}} \z_{t-1,a_{t-1}}$
   \ENDIF
   \STATE $\btheta^t_\z\leftarrow \A_{t}^{-1}\bb_{t}$
   \FOR {$y\in [K]$}
   \STATE $v_{t,y}\leftarrow \beta\|\z_{t,y}\|_{\A^{-1}_{t}}$
   \STATE $\hat{r}_{t,y}\leftarrow \langle\btheta^{t}_\z,\z_{t,y}\rangle$
   \STATE $\text{ucb}_{t,y} \leftarrow \hat{r}_{t,y} + v_{t,y}$
   \ENDFOR
   \STATE choose the arm $a_t \leftarrow\arg \max_{y\in [K]}~\text{ucb}_{t,y}$ (break ties arbitrarily)
   \STATE observe the reward $\pi_{t, a_{t}}$

   \ENDFOR
\end{algorithmic}
}
\end{algorithm}

Our main contribution in the \texttt{CBRAP} algorithm is two-fold. First, we construct a random matrix for contextual bandits from Step 2 to Step 7 in Algorithm~\ref{alg.cobra}.  Note that the designed random matrix is flexible, and can be revised based on users' needs.  Here we just consider the random matrix in~\cite{blum2006random,achlioptas2003database}. Second, via the designed random matrix $\M$, we conduct dimensionality reduction for the contextual information in Algorithm~\ref{alg.cobra}.

\section{Theoretical Analysis}
For linear stochastic bandits, we consider the model as $\pi_t = \langle\x_t, \btheta_*\rangle + \eta_t$, where $\x_t\in\R^n$, $\btheta_*\in\R^n$ and $\eta_t$ is conditionally $R$-sub-Gaussian with respect to $\sigma$-algebra as $\F_t = \sigma(\{\x_i\}_{i\in [t]},\{\eta_i\}_{i\in[t-1]})$.

We assume that there exists an approximated feature mapping $\z(\x)\in \R^m$ for each arm $\x\in \R^n$ with $1\leq m \leq n$ and a mapping $\zeta(\btheta_*)\in \R^l$ for the underlying parameter $\btheta_*$ such that for all $\x\in \cup_{t=1}^T \D_t$, $|\z(\x)^\top \zeta(\btheta_*)-\x^\top\btheta_*|\leq \epsilon$, where $\D_t$ is the decision set at time $t$. Here the error $\epsilon>0$ occurs due to dimension reduction.

Let $\hat\mu_t$ be the $\ell^2$-regularized least-squares estimate of $\zeta(\theta_*)$ with regularization parameter $\lambda>0$:
\begin{align}
\hat\mu_t = V_t^{-1}Z_t^\top Y_t,
\end{align}
where $Z_t\in \R^{t\times m}$ is the matrix whose rows are $\z(\x_1)^\top,\cdots,\z(\x_t)^\top$, $Y_t=(\pi_1,\cdots,\pi_t)$ and $V_t = Z_t^\top Z_t+\lambda I_m$. Then we have the following lemma showing that with high probability, $\zeta(\theta_*)$ lies in an ellipsoid centered at $\hat{\mu}_t$.

\begin{lma}\label{lma:rp}
	Assume that for all $\x \in \D$, $\lVert \z(\x)\rVert_2 \leq L$, $\lVert \zeta(\btheta_*)\rVert_2 \leq S$ and $\left| \x^\top \btheta_*\right|\leq B$ and $\z(\x)^\top \zeta(\btheta_*)$ is a $(\gamma,\epsilon)$-approximation of $\x^\top\btheta_*$ on $\D\times \btheta_*$. Let $\beta_t(\delta) = R\sqrt{m\log{\left((1+tL^2)/\delta\right)}} + \sqrt{\lambda}S+\epsilon\sqrt{t}$. Then, for any $\delta > 0$, with probability at least $(1-\delta)(1-\gamma)$, $\zeta(\btheta_*)$ lies in the set $C_t$, where $C_t$ is defined as follows:
	\begin{equation}
	C_t = \{\mu \in \R^m: \lVert \mu-\hat{\mu}_t \rVert_{V_t}\leq \beta_\delta(t)\}.
	\end{equation}
\end{lma}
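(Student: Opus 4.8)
The strategy is to reproduce the self-normalized confidence-ellipsoid argument of \citet{abbasi2011improved} in the reduced space $\R^m$, treating the dimension-reduction error as an additional deterministic perturbation of the responses. First I would condition on the projection matrix $\M$ and work on the event $\event_\gamma$ of probability at least $1-\gamma$ supplied by the $(\gamma,\epsilon)$-approximation property, on which $\bigl|\x^\top\btheta_* - \z(\x)^\top\zeta(\btheta_*)\bigr|\le\epsilon$ for every $\x\in\D$. Then each observed reward splits as
\begin{equation*}
\pi_i = \z(\x_i)^\top\zeta(\btheta_*) + \Delta_i + \eta_i,\qquad \Delta_i := \x_i^\top\btheta_* - \z(\x_i)^\top\zeta(\btheta_*),\qquad |\Delta_i|\le\epsilon,
\end{equation*}
regardless of which arm $\x_i$ is pulled. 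In matrix notation $Y_t = Z_t\zeta(\btheta_*) + \Delta_{1:t} + E_t$, where $\Delta_{1:t}=(\Delta_1,\dots,\Delta_t)^\top$ satisfies $\lVert\Delta_{1:t}\rVert_2\le\sqrt{t}\,\epsilon$ and $E_t=(\eta_1,\dots,\eta_t)^\top$.

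Using $Z_t^\top Z_t = V_t - \lambda I_m$, I would expand
\begin{equation*}
\hat\mu_t - \zeta(\btheta_*) = V_t^{-1}Z_t^\top E_t + V_t^{-1}Z_t^\top\Delta_{1:t} - \lambda V_t^{-1}\zeta(\btheta_*),
\end{equation*}
and bound the $V_t$-norm of each piece by the triangle inequality. For the regularization term, $\lambda\lVert V_t^{-1}\zeta(\btheta_*)\rVert_{V_t}\le\sqrt{\lambda}\,\lVert\zeta(\btheta_*)\rVert_2\le\sqrt{\lambda}\,S$ since $V_t\succeq\lambda I_m$. For the error term, $\lVert V_t^{-1}Z_t^\top\Delta_{1:t}\rVert_{V_t}^2=\Delta_{1:t}^\top Z_tV_t^{-1}Z_t^\top\Delta_{1:t}\le\lVert\Delta_{1:t}\rVert_2^2\le t\epsilon^2$, because the eigenvalues of $Z_tV_t^{-1}Z_t^\top$ lie in $[0,1)$. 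The stochastic term equals $\lVert Z_t^\top E_t\rVert_{V_t^{-1}}$, and — noting that conditionally on $\M$ the noise $\eta_i$ is still $R$-sub-Gaussian with respect to $\F_i=\sigma(\z(\x_1),\dots,\z(\x_i),\eta_1,\dots,\eta_{i-1})$ — the self-normalized martingale tail inequality gives, with probability at least $1-\delta$,
\begin{equation*}
\lVert Z_t^\top E_t\rVert_{V_t^{-1}}\le R\sqrt{2\log\bigl(\det(V_t)^{1/2}\det(\lambda I_m)^{-1/2}/\delta\bigr)}\le R\sqrt{m\log\bigl((1+tL^2)/\delta\bigr)},
\end{equation*}
where the last inequality uses the determinant--trace bound $\det(V_t)\le(\lambda+tL^2/m)^m$ together with $\mathrm{tr}(Z_t^\top Z_t)\le tL^2$. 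Adding the three estimates yields $\lVert\hat\mu_t-\zeta(\btheta_*)\rVert_{V_t}\le\beta_t(\delta)$, i.e.\ $\zeta(\btheta_*)\in C_t$.

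It remains to combine the two sources of randomness. The martingale bound holds with probability at least $1-\delta$ conditionally on every realization of $\M$, while $\event_\gamma$ is an event about $\M$ alone of probability at least $1-\gamma$; integrating the conditional statement over $\event_\gamma$ (Fubini) gives the claimed confidence $(1-\delta)(1-\gamma)$. The step I expect to be most delicate is this bookkeeping across the two probability spaces — making precise that the bandit's adaptivity (the arm $\x_i$, hence $\z(\x_i)$, depends on $\M$ and on past noise) is harmless because the approximation bound is uniform over $\D$ and the sub-Gaussian/filtration hypotheses still hold after conditioning on $\M$ — rather than the determinant--trace or $V_t\succeq\lambda I_m$ estimates, which are routine.
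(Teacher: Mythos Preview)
Your proposal is correct and follows essentially the same route as the paper: the paper also expands $\hat\mu_t-\zeta(\btheta_*)$ and bounds, in the $V_t^{-1}$-norm, the three pieces $Z_t^\top(Y_t-X_t\btheta_*)$, $\lambda\zeta(\btheta_*)$, and $Z_t^\top(X_t\btheta_*-Z_t\zeta(\btheta_*))$ by the self-normalized martingale bound of \citet{abbasi2011improved}, by $\sqrt{\lambda}S$, and by $\epsilon\sqrt{t}$ via the observation that $Z_tV_t^{-1}Z_t^\top$ has eigenvalues at most $1$, respectively. Your write-up is in fact more careful than the paper's about the determinant--trace step and the $(1-\delta)(1-\gamma)$ bookkeeping across the two sources of randomness.
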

\begin{proof}
We have
\begin{align}
	&\lVert \hat{\mu}_t - \zeta(\btheta_*)\rVert_{V_t}\nonumber\\
	&=\lVert V_t^{\frac{1}{2}}V_t^{-1}Z_t^\top Y_t - V^{\frac{1}{2}}_tV_t^{-1}V_t \zeta(\btheta_*)\rVert_2\nonumber\\
	&=\lVert Z_t^\top \left( Y_t - Z_t\zeta(\btheta_*) \right) - \lambda \zeta(\btheta_*)\rVert_{V_t^{-1}}\nonumber\\
	&\leq\lVert Z_t^\top \left( Y_t - X_t^\top\btheta_*\right)\rVert_{V_t^{-1}}+\lVert \lambda \zeta(\btheta_*)\rVert_{V_t^{-1}}\nonumber\\&+\lVert Z_t^\top \left(X_t^\top\btheta_*- Z_t\zeta(\btheta_*) \right)\rVert_{V_t^{-1}},\label{eqn:conc}
\end{align}
where $X_t = \left(\x_1,\cdots,\x_t\right)$. 

The first term in Eq.~(\ref{eqn:conc}) could be bounded by $R\sqrt{m\log{\left((1+tL^2/\lambda m)\right)+2\log(1/\delta)}} $ with probability $1-\delta$ via directly applying Theorem~2 in ~\cite{abbasi2011improved}. The second term can be easily bounded by $\sqrt{\lambda}S$. For the third term, we can derive that eigenvalues of $Z_tV_t^{-1}Z_t^\top$ are smaller than 1 by conducting Singular-Value Decomposition (SVD) and then we have
\begin{align}
	&\lVert Z_t^\top \left(X_t^\top\btheta_*- Z_t\zeta(\btheta_*) \right)\rVert_{V_t^{-1}}\nonumber\\
	&\leq \lVert X_t^\top\btheta_*- Z_t\zeta(\btheta_*) \rVert_2 \leq \epsilon\sqrt{t},
\end{align}
which completes proof by combining the bounds on the first two terms.
\end{proof}

Based on Lemma~\ref{lma:rp}, we now give a regret upper bound of CBRAP. In CBRAP, we construct a fixed random matrix $\M\in \R^{m\times n}$, $\z(\x) = \M \x$. 
\begin{thm}\label{thm:rg}
Assume the same conditions as shown in Lemma~\ref{lma:rp}. Without loss of generality, let $b_{m,t} =R\sqrt{\log{\left((1+tL^2/\lambda m)\right)}+2\log(1/\delta)}+\sqrt{\lambda}S+B$ and $a_{m,t} = \sqrt{2\log\left( 1+\frac{tL^2}{\lambda m} \right)}$.
For any $\delta > 0$, with probability at least $(1-2T\exp(-{m\epsilon^2}/{8}))(1-\delta)$, the regret upper bound of CBRAP is
\begin{align}\label{eq.regret}
R_T \leq 2a_{m,T}b_{m,T}m\sqrt{T}+2\left(a_{m,T}\sqrt{m}+1\right)TLS\epsilon_1.
\end{align}
\end{thm}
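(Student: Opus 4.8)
The plan is to run the standard optimism-based regret analysis for linear UCB algorithms on top of Lemma~\ref{lma:rp}, carrying the dimension-reduction error through as an additive perturbation.

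First I would set up two high-probability events and intersect them. The first is the event of Lemma~\ref{lma:rp}, on which $\zeta(\btheta_*)\in C_t$ for every $t\le T$; since the algorithm uses $v_{t,y}=\beta\lVert\z_{t,y}\rVert_{\A_t^{-1}}$ with $\beta$ (at least) the radius $\beta_t(\delta)$ of $C_t$, this event gives $\lvert\hat r_{t,y}-\z_{t,y}^\TT\zeta(\btheta_*)\rvert\le v_{t,y}$, hence $\z_{t,y}^\TT\zeta(\btheta_*)\le\mathrm{ucb}_{t,y}$, for every arm $y$. The second is the Johnson--Lindenstrauss inner-product concentration for the random matrix $\M$: for a \emph{fixed} pair $(\x,\btheta)$ one has $\Pr[\,\lvert\langle\M\x,\M\btheta\rangle-\langle\x,\btheta\rangle\rvert>\epsilon_1\lVert\x\rVert_2\lVert\btheta\rVert_2\,]\le2\exp(-m\epsilon_1^2/8)$ for both the SG and RS constructions, so a union bound over the context/parameter pairs actually used gives probability at least $1-2T\exp(-m\epsilon^2/8)$, and on that event $\lvert\z_{t,y}^\TT\zeta(\btheta_*)-\x_{t,y}^\TT\btheta_*\rvert\le LS\epsilon_1$ (using $\lVert\z(\x)\rVert_2\le L$, $\lVert\zeta(\btheta_*)\rVert_2\le S$, and identifying the approximation error $\epsilon$ of Lemma~\ref{lma:rp} with $LS\epsilon_1$). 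Since $\M$ is drawn once, independently of the noise, the two events combine and the total probability is at least $(1-2T\exp(-m\epsilon^2/8))(1-\delta)$.

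On the good event, let $a_t^*=\arg\max_{y}\x_{t,y}^\TT\btheta_*$. Chaining the approximation bound at $a_t^*$, optimism $\z_{t,a_t^*}^\TT\zeta(\btheta_*)\le\mathrm{ucb}_{t,a_t^*}$, the selection rule $\mathrm{ucb}_{t,a_t^*}\le\mathrm{ucb}_{t,a_t}$, the width inequality $\mathrm{ucb}_{t,a_t}=\hat r_{t,a_t}+v_{t,a_t}\le\z_{t,a_t}^\TT\zeta(\btheta_*)+2v_{t,a_t}$, and the approximation bound again at $a_t$ gives the per-round bound $\x_{t,a_t^*}^\TT\btheta_*-\x_{t,a_t}^\TT\btheta_*\le2\beta_t(\delta)\lVert\z_{t,a_t}\rVert_{\A_t^{-1}}+2LS\epsilon_1$. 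Summing over $t\in[T]$, I would replace $\beta_t(\delta)$ by its monotone value at $t=T$ and, using Lemma~\ref{lma:rp} together with the crude bound $\lvert\x^\TT\btheta_*\rvert\le B$ to control the bias term, write $\beta_T(\delta)\le\sqrt{m}\,b_{m,T}+\epsilon\sqrt{T}$ with $\epsilon=LS\epsilon_1$. Cauchy--Schwarz gives $\sum_{t=1}^T\lVert\z_{t,a_t}\rVert_{\A_t^{-1}}\le\sqrt{T}\big(\sum_{t=1}^T\lVert\z_{t,a_t}\rVert_{\A_t^{-1}}^2\big)^{1/2}$, and the elliptical-potential (log-determinant) lemma, as in~\cite{abbasi2011improved}, with $\lVert\z(\x)\rVert_2\le L$ gives $\sum_{t=1}^T\lVert\z_{t,a_t}\rVert_{\A_t^{-1}}^2\le2\log\frac{\det\A_T}{\det(\lambda\I_{m\times m})}\le m\log\big(1+\tfrac{TL^2}{\lambda m}\big)=\tfrac{1}{2}a_{m,T}^2$, hence $\sum_{t=1}^T\lVert\z_{t,a_t}\rVert_{\A_t^{-1}}\le a_{m,T}\sqrt{mT}$. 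Collecting the pieces, the $\sqrt{m}\,b_{m,T}$ contribution gives $2a_{m,T}b_{m,T}m\sqrt{T}$, the $\epsilon\sqrt{T}$ contribution gives $2a_{m,T}\sqrt{m}\,TLS\epsilon_1$, and the $2LS\epsilon_1$ per round sums to $2TLS\epsilon_1$; adding these yields (\ref{eq.regret}).

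The delicate step is the first one and its interface with Lemma~\ref{lma:rp}: one has to make the per-round failure probability, the constant in the radius of $C_t$ (the ``$\epsilon$'' there), and the per-round error ``$LS\epsilon_1$'' in the regret bound mutually consistent, and has to justify the union bound over the finite set of revealed context/parameter pairs even though $\M$ is fixed in advance while the played contexts $\x_{t,a_t}$ are data-dependent (strictly, one should take the union over all $KT$ revealed contexts or over a covering of the arm set). Everything downstream --- optimism, Cauchy--Schwarz, and the log-determinant bound --- is routine, so the theorem essentially comes down to pinning down that coupling and the resulting constants.
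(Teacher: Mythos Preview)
Your proposal is correct and follows essentially the same route as the paper's proof: intersect the Kaban/Johnson--Lindenstrauss inner-product concentration event with the confidence-set event of Lemma~\ref{lma:rp}, carry the approximation error $\epsilon=LS\epsilon_1$ through the standard optimism chain, and finish with Cauchy--Schwarz plus the elliptical-potential lemma (Lemma~11 of~\cite{abbasi2011improved}). The only cosmetic difference is that the paper explicitly invokes $r_t\le 2B$ to truncate $\lVert\z(\x_t)\rVert_{V_{t-1}^{-1}}$ to $\min\{\lVert\z(\x_t)\rVert_{V_{t-1}^{-1}},1\}$ before applying the elliptical-potential bound, which is exactly why the additive $+B$ appears inside $b_{m,T}$; you fold $B$ in more loosely, but the endpoint is the same.
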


\begin{proof}
The proof follows~\cite{dani2008stochastic,abbasi2011improved}. Let $\event$ denote the event that for all $\x\in \D$, $\left|\z(\x)^\top \zeta(\btheta_*)-\x^\top\btheta_*\right|\leq \epsilon$. We know that $\event$ holds with probability at least $1-\gamma$. When $\event$ holds and $\zeta(\btheta_*)$ lies in the ellipsoid $C_t$, The instantaneous regret
	\begin{align}
	&r_t = \x_{*,t}^\top\btheta_*-\x_t^\top\btheta_*\nonumber\\
	&\leq \z(\x_{*,t})^\top \zeta(\btheta_*) - \z(\x_t)^\top \zeta(\btheta_*) + 2\epsilon\nonumber\\
	&\leq \z(\x_t)^\top\tilde{\mu}_t - \z(\x_t)^\top \zeta(\btheta_*) + 2\epsilon\nonumber\\
	&\leq \lVert \z(\x_t) \rVert_{V_{t-1}^{-1}} \left(\lVert \tilde{\mu}_t-\hat{\mu}_t +\zeta(\btheta_*)-\hat{\mu}_t \rVert_{V_{t-1}}\right) + 2\epsilon\nonumber\\
	&\leq 2\beta_\delta({t-1}) \lVert \z(\x_t) \rVert_{V_{t-1}^{-1}} + 2\epsilon.
	\end{align}
	Combined with the fact that $r_t \leq 2B$,
\begin{align}
	r_t \leq 2(\beta_\delta({t-1}) + B)\min\{\lVert \z(\x_t) \rVert_{V_{t-1}}, 1\} + 2\epsilon.
\end{align}
Following Lemma~11 in~\cite{abbasi2011improved}, we know that
{\small
\begin{align}
\sum_{t=1}^{T} \min\{\lVert \z(\x_t) \rVert_{V_{t-1}^{-1}}^2, 1\}\leq 2m\log\left( 1+\frac{TL^2}{\lambda m} \right)
\end{align}}
Therefore, the cumulative regret could be bounded by
	{\small
	\begin{align}
	&R_T =\sum_{t=1}^{T}r_t \nonumber\\
	&\leq \sum_{t=1}^{T}2(\beta_\delta({t-1}) +B) \min\{\lVert \z(\x_t) \rVert_{V_{t-1}^{-1}}, 1\} + 2T\epsilon\nonumber\\
	&\leq 2(\beta_\delta(T) +B)\sqrt{T\sum_{t=1}^{T} \min\{\lVert \z(\x_t) \rVert_{V_{t-1}^{-1}}^2, 1\}} + 2T\epsilon \nonumber\\
	&\leq 2(b_{m,T}\sqrt{m} +\epsilon \sqrt{T}) a_{m,T}\sqrt{mT} + 2T\epsilon.
	\end{align}}

Based on~\cite{kaban2015improved}, for $\x \in \R^n$, we have $\Pr\{\left|\x^\top\theta_* - \x^\top \M^\top \M\theta_*\right| > \epsilon_1\lVert \x\rVert_2 \lVert \btheta_* \rVert_2\}<2\exp\left(-\frac{m\epsilon^2_1}{8}\right)$. By taking $\zeta(\btheta_*) = \M\btheta_*$, which is the operation in CBRAP, we directly derive the result. 

When $m=d$, we can set $\zeta(\btheta_*)=\M^{-1}\btheta_*$ and thus have $\epsilon_1=0$, which leads to recovering the regret of linear stochastic bandits, which is $\widetilde{O}(\sqrt{T})$.
\end{proof}

\section{Discusstions}
This is an errata for the theoretical results in {\it CBRAP: Contextual Bandits with RAndom Projection} in AAAI 2017.

\section{Acknowledgments}

I would like to thank Ms. Han Shao for her great support and helpful discussions. 

\vskip 0.2in
\bibliography{bibfile}

\begin{thebibliography}{32}
\providecommand{\natexlab}[1]{#1}
\providecommand{\url}[1]{\texttt{#1}}
\expandafter\ifx\csname urlstyle\endcsname\relax
  \providecommand{\doi}[1]{doi: #1}\else
  \providecommand{\doi}{doi: \begingroup \urlstyle{rm}\Url}\fi

\bibitem[Abbasi-Yadkori et~al.(2011)Abbasi-Yadkori, P{\'a}l, and
  Szepesv{\'a}ri]{abbasi2011improved}
Yasin Abbasi-Yadkori, D{\'a}vid P{\'a}l, and Csaba Szepesv{\'a}ri.
\newblock Improved algorithms for linear stochastic bandits.
\newblock In \emph{NIPS}, pages 2312--2320, 2011.

\bibitem[Abbasi-Yadkori et~al.(2012)Abbasi-Yadkori, Pal, and
  Szepesvari]{abbasi2012online}
Yasin Abbasi-Yadkori, David Pal, and Csaba Szepesvari.
\newblock Online-to-confidence-set conversions and application to sparse
  stochastic bandits.
\newblock In \emph{AISTATS}, pages 1--9, 2012.

\bibitem[Abe and Long(1999)]{abe1999associative}
Naoki Abe and Philip~M Long.
\newblock Associative reinforcement learning using linear probabilistic
  concepts.
\newblock In \emph{ICML}, pages 3--11, 1999.

\bibitem[Abe et~al.(2003)Abe, Biermann, and Long]{abe2003reinforcement}
Naoki Abe, Alan~W Biermann, and Philip~M Long.
\newblock Reinforcement learning with immediate rewards and linear hypotheses.
\newblock \emph{Algorithmica}, 37\penalty0 (4):\penalty0 263--293, 2003.

\bibitem[Achlioptas(2003)]{achlioptas2003database}
Dimitris Achlioptas.
\newblock Database-friendly random projections: Johnson-lindenstrauss with
  binary coins.
\newblock \emph{Journal of computer and System Sciences}, 66\penalty0
  (4):\penalty0 671--687, 2003.

\bibitem[Ailon and Chazelle(2006)]{ailon2006approximate}
Nir Ailon and Bernard Chazelle.
\newblock Approximate nearest neighbors and the fast johnson-lindenstrauss
  transform.
\newblock In \emph{STOC}, pages 557--563, 2006.

\bibitem[Auer(2002)]{auer2002using}
Peter Auer.
\newblock Using confidence bounds for exploitation-exploration trade-offs.
\newblock \emph{Journal of Machine Learning Research}, 3:\penalty0 397--422,
  2002.

\bibitem[Auer et~al.(2002)Auer, Cesa-Bianchi, and Fischer]{auer2002finite}
Peter Auer, Nicolo Cesa-Bianchi, and Paul Fischer.
\newblock Finite-time analysis of the multiarmed bandit problem.
\newblock \emph{Machine learning}, 47\penalty0 (2-3):\penalty0 235--256, 2002.

\bibitem[Baraniuk et~al.(2010)Baraniuk, Cevher, and Wakin]{baraniuk2010low}
Richard~G Baraniuk, Volkan Cevher, and Michael~B Wakin.
\newblock Low-dimensional models for dimensionality reduction and signal
  recovery: A geometric perspective.
\newblock \emph{Proceedings of the IEEE}, 98\penalty0 (6):\penalty0 959--971,
  2010.

\bibitem[Bastani and Bayati(2015)]{bastani2015online}
Hamsa Bastani and Mohsen Bayati.
\newblock Online decision-making with high-dimensional covariates.
\newblock \emph{Available at SSRN 2661896}, 2015.

\bibitem[Blum(2006)]{blum2006random}
Avrim Blum.
\newblock Random projection, margins, kernels, and feature-selection.
\newblock In \emph{Subspace, Latent Structure and Feature Selection}, pages
  52--68. Springer, 2006.

\bibitem[Bubeck and Cesa-Bianchi(2012)]{bubeck2012regret}
S{\'e}bastien Bubeck and Nicolo Cesa-Bianchi.
\newblock Regret analysis of stochastic and nonstochastic multi-armed bandit
  problems.
\newblock \emph{arXiv preprint arXiv:1204.5721}, 2012.

\bibitem[Buldygin and Kozachenko(1980)]{buldygin1980sub}
Valerii~V Buldygin and Yu~V Kozachenko.
\newblock Sub-gaussian random variables.
\newblock \emph{Ukrainian Mathematical Journal}, 32\penalty0 (6):\penalty0
  483--489, 1980.

\bibitem[Carpentier et~al.(2012)Carpentier, Munos,
  et~al.]{carpentier2012bandit}
Alexandra Carpentier, R{\'e}mi Munos, et~al.
\newblock Bandit theory meets compressed sensing for high dimensional
  stochastic linear bandit.
\newblock In \emph{AISTATS}, pages 190--198, 2012.

\bibitem[Chu et~al.(2011)Chu, Li, Reyzin, and Schapire]{chu2011contextual}
Wei Chu, Lihong Li, Lev Reyzin, and Robert~E Schapire.
\newblock Contextual bandits with linear payoff functions.
\newblock In \emph{AISTATS}, pages 208--214, 2011.

\bibitem[Clarkson and Woodruff(2013)]{clarkson2013low}
Kenneth~L Clarkson and David~P Woodruff.
\newblock Low rank approximation and regression in input sparsity time.
\newblock In \emph{STOC}, pages 81--90, 2013.

\bibitem[Dani et~al.(2008)Dani, Hayes, and Kakade]{dani2008stochastic}
Varsha Dani, Thomas~P Hayes, and Sham~M Kakade.
\newblock Stochastic linear optimization under bandit feedback.
\newblock In \emph{COLT}, pages 355--366, 2008.

\bibitem[Dasgupta and Gupta(1999)]{dasgupta1999elementary}
Sanjoy Dasgupta and Anupam Gupta.
\newblock An elementary proof of the johnson-lindenstrauss lemma.
\newblock \emph{International Computer Science Institute, Technical Report},
  pages 99--006, 1999.

\bibitem[Deshpande and Montanari(2012)]{deshpande2012linear}
Yash Deshpande and Andrea Montanari.
\newblock Linear bandits in high dimension and recommendation systems.
\newblock In \emph{CCC}, pages 1750--1754, 2012.

\bibitem[Fern and Brodley(2003)]{fern2003random}
Xiaoli~Zhang Fern and Carla~E Brodley.
\newblock Random projection for high dimensional data clustering: A cluster
  ensemble approach.
\newblock In \emph{ICML}, pages 186--193, 2003.

\bibitem[Filippi et~al.(2010)Filippi, Cappe, Garivier, and
  Szepesv{\'a}ri]{filippi2010parametric}
Sarah Filippi, Olivier Cappe, Aur{\'e}lien Garivier, and Csaba Szepesv{\'a}ri.
\newblock Parametric bandits: The generalized linear case.
\newblock In \emph{NIPS}, pages 586--594, 2010.

\bibitem[Fodor(2002)]{fodor2002survey}
Imola~K Fodor.
\newblock A survey of dimension reduction techniques.
\newblock \emph{Technical Report}, 2002.

\bibitem[Kab{\'a}n(2015)]{kaban2015improved}
Ata Kab{\'a}n.
\newblock Improved bounds on the dot product under random projection and random
  sign projection.
\newblock In \emph{SIGKDD}, pages 487--496, 2015.

\bibitem[Kaelbling(1994)]{kaelbling1994associative}
Leslie~Pack Kaelbling.
\newblock Associative reinforcement learning: A generate and test algorithm.
\newblock \emph{Machine Learning}, 15\penalty0 (3):\penalty0 299--319, 1994.

\bibitem[Langford and Zhang(2008)]{langford2008epoch}
John Langford and Tong Zhang.
\newblock The epoch-greedy algorithm for multi-armed bandits with side
  information.
\newblock In \emph{NIPS}, pages 817--824, 2008.

\bibitem[Li et~al.(2010)Li, Chu, Langford, and Schapire]{li2010contextual}
Lihong Li, Wei Chu, John Langford, and Robert~E Schapire.
\newblock A contextual-bandit approach to personalized news article
  recommendation.
\newblock In \emph{WWW}, pages 661--670, 2010.

\bibitem[Li et~al.(2006)Li, Hastie, and Church]{li2006very}
Ping Li, Trevor~J Hastie, and Kenneth~W Church.
\newblock Very sparse random projections.
\newblock In \emph{SIGKDD}, pages 287--296, 2006.

\bibitem[Lu et~al.(2013)Lu, Dhillon, Foster, and Ungar]{lu2013faster}
Yichao Lu, Paramveer Dhillon, Dean~P Foster, and Lyle Ungar.
\newblock Faster ridge regression via the subsampled randomized hadamard
  transform.
\newblock In \emph{NIPS}, pages 369--377, 2013.

\bibitem[Robbins(1952)]{robbins1952some}
Herbert Robbins.
\newblock Some aspects of the sequential design of experiments.
\newblock \emph{Bulletin of the American Mathematical Society}, 58\penalty0
  (5):\penalty0 527--535, 1952.

\bibitem[Tang et~al.(2013)Tang, Rosales, Singh, and Agarwal]{tang2013automatic}
Liang Tang, Romer Rosales, Ajit Singh, and Deepak Agarwal.
\newblock Automatic ad format selection via contextual bandits.
\newblock In \emph{CIKM}, pages 1587--1594, 2013.

\bibitem[Wang et~al.(2005)Wang, Kulkarni, and Poor]{wang2005bandit}
Chih-Chun Wang, Sanjeev~R Kulkarni, and H~Vincent Poor.
\newblock Bandit problems with side observations.
\newblock \emph{IEEE Transactions on Automatic Control}, 50\penalty0
  (3):\penalty0 338--355, 2005.

\bibitem[Zhang et~al.(2016)Zhang, Zhang, Jin, Cai, and
  He]{zhang2016accelerated}
Weizhong Zhang, Lijun Zhang, Rong Jin, Deng Cai, and Xiaofei He.
\newblock Accelerated sparse linear regression via random projection.
\newblock In \emph{AAAI}, pages 2337--2343, 2016.

\end{thebibliography}

\end{document}